\newtheorem{observation}{Observation}
\newcommand{\ie}{i.e.}
\newcommand{\eg}{e.g.}
\newcommand{\virg}[1]{``#1''}
\newcommand{\tuple}[1]{\ensuremath{\langle #1 \rangle}}
\newcommand{\set}[1]{\ensuremath{\{#1\}}}
\newcommand{\powset}[1]{\ensuremath{2^{#1}}}
\newcommand{\LAND}{\bigwedge}
\newcommand{\LOR}{\bigvee}
\newcommand{\winner}{\ensuremath{\mbox{\textbf{Cegartix}}}}
\newcommand{\maxsat}{\ensuremath{\mbox{\textbf{prefMaxSAT}}}}
\newcommand{\asp}{\ensuremath{\mbox{\textbf{prefASP}}}}
\newcommand{\attackers}[1]{#1^{-}}
\newcommand{\MAXSATSOLVER}{\ensuremath{\mathit{MXSS}}}
\newcommand{\satproblem}{\ensuremath{\Pi}}
\newcommand{\ARGSTRUE}{\ensuremath{\alpha}}
\newcommand{\MODELS}{\ensuremath{\eta}}
\newcommand{\K}{\ensuremath{\mathcal{K}}}
\newcommand{\sol}[1]{\ensuremath{\sigma(#1)}}
\newcommand{\argument}[1]{\ensuremath{\textrm{\textbf{#1}}}}
\newcommand{\arga}{\argument{a}}
\newcommand{\argb}{\argument{b}}
\newcommand{\argc}{\argument{c}}
\newcommand{\AFname}{\ensuremath{AF}}
\newcommand{\setargs}{\ensuremath{\mathcal{A}}}
\newcommand{\setattacks}{\ensuremath{\mathcal{R}}}
\newcommand{\AF}[1]{\tuple{\setargs_{#1}, \setattacks_{#1}}}
\newcommand{\anAF}{\AF{}}
\newcommand{\anAFsymbol}{\ensuremath{\Gamma}}
\newcommand{\AFsymbol}[1]{\ensuremath{\Gamma_{#1}}}
\newcommand{\attacks}[2]{\ensuremath{#1 \rightarrow #2}}
\newcommand{\carfun}[1]{\ensuremath{\mathrm{F}_{#1}}}                 % characteristic function of an arg. framework #1
\newcommand{\dungconffree}{conflict--free}
\newcommand{\dungacceptable}{acceptable}
\newcommand{\dungcharacteristic}{characteristic}
\newcommand{\dungadmissible}{admissible}
\newcommand{\dunggrounded}{grounded}
\newcommand{\dungpreferred}{preferred}
\newcommand{\aset}{\ensuremath{S}}
\newcommand{\attacked}[1]{#1^{+}}             
\newcommand{\gensem}{\mathsf{S}}
\newcommand{\setgenext}[2]{\mathcal{E}_{#1}(#2)}
\newcommand{\GR}{\mathcal{GR}}
\newcommand{\PR}{\mathsf{PR}}
\newcommand{\setadmwin}[1]{\mathcal{E}_{\mathit{adm}}^{iun}(#1)}   %admissble sets with initial arguments
\newcommand{\proplang}{\ensuremath{\mathcal{L}}}
\newcommand{\vfunsym}{\ensuremath{v}}
\newcommand{\vfun}[1]{\ensuremath{\vfunsym(#1)}}
\newcommand{\admenc}[1]{\ensuremath{\mathit{admsat}_{#1}}}
\newcommand{\admaspenc}[1]{\ensuremath{\mathit{admasp}_{#1}}}
\newcommand{\varass}{\ensuremath{T}}
\newcommand{\lastpref}{\ensuremath{E}}
\newcommand{\restrict}[2]{\ensuremath{{#1}_{\downarrow {#2}}}}
\newcommand{\compo}{\ensuremath{\circ}}
\newcommand{\Setmin}[2]{\ensuremath{S^{min}_{#1}(#2)}}
\newcommand{\Setmax}[2]{\ensuremath{S^{max}_{#1}(#2)}}
\newcommand{\Cardmin}[2]{\ensuremath{C^{min}_{#1}(#2)}}
\newcommand{\Cardmax}[2]{\ensuremath{C^{max}_{#1}(#2)}}
\newcommand{\nosupset}[1]{\ensuremath{\mathcal{N}^{\supseteq}(#1)}}
\newcommand{\nosubset}[1]{\ensuremath{\mathcal{N}^{\subseteq}(#1)}}
\newcommand{\Cardminasp}[2]{\ensuremath{C^{min}_{#1,asp}(#2)}}
\newcommand{\Cardmaxasp}[2]{\ensuremath{C^{max}_{#1,asp}(#2)}}
\newcommand{\nosupsetasp}[1]{\ensuremath{\mathcal{N}^{\supseteq}_{asp}(#1)}}
\newcommand{\nosubsetasp}[1]{\ensuremath{\mathcal{N}^{\subseteq}_{asp}(#1)}}
\newcommand{\NP}{\ensuremath{\rm NP}}
\newcommand{\SigmaP}[1]{\ensuremath{{\Sigma}_{#1}^{P}}}
\newcommand{\CONP}{\ensuremath{\mbox{\rm co-}\NP}}
\newcommand{\DeltaP}[1]{\ensuremath{{\Delta}_{#1}^{P}}}
\newcommand{\nop}[1]{}
\begin{document}

% first the title is needed
%\title{Effectiveness of MaxSAT in Argumentation Hard Problems} %
%\title{On the Effectiveness of MaxSAT in Argumentation Hard Problems} %Suggested by FC
%\title{On the Effectiveness of MaxSAT for Solving Hard Argumentation Problems} %Suggested by WF
\title{Solving Set Optimization Problems by Cardinality Optimization via Weak Constraints with an Application to Argumentation\thanks{An extended version of this paper will appear at ECAI 2016 \cite{fabe-etal-2016-ecai}.}} 
% WF: just a working title
\author{Wolfgang Faber\inst{1} \and Mauro Vallati\inst{1} \and Federico Cerutti\inst{2} \and Massimiliano Giacomin\inst{3}}

\institute{
  University of Huddersfield, UK\\
  \email{n.surname@hud.ac.uk}\\
  \and
  Cardiff University, UK\\
  \email{ceruttif@cardiff.ac.uk}\\
  \and
  Universit\`a degli Studi di Brescia, Italy\\
  \email{massimiliano.giacominb@unibs.it}
}

\maketitle
\bibliographystyle{splncs03}

\begin{abstract}

Optimization---minimization or maximization---in the lattice of subsets is a frequent operation in Artificial Intelligence tasks. Examples are subset-minimal model-based diagnosis, nonmonotonic reasoning by means of circumscription, or preferred extensions in abstract argumentation. Finding the optimum among many admissible solutions is often harder than finding admissible solutions with respect to both computational complexity and methodology. This paper addresses the former issue by means of an effective method for finding subset-optimal solutions. It is based on the relationship between cardinality-optimal and subset-optimal solutions, and the fact that many logic-based declarative programming systems provide constructs for finding cardinality-optimal solutions, for example maximum satisfiability (MaxSAT) or weak constraints in Answer Set Programming (ASP). Clearly each cardinality-optimal solution is also a subset-optimal one, and if the language also allows for the addition of particular restricting constructs (both MaxSAT and ASP do) then all subset-optimal solutions can be found by an iterative computation of cardinality-optimal solutions. As a showcase, the computation of preferred extensions of abstract argumentation frameworks using the proposed method is studied.

%Abstract argumentation is a general model that encompasses several
%nonmonotonic, as well as argumentation-based reasoning processes. An
%abstract argumentation framework consists of a set of arguments and a
%conflict relation on these. The semantics of these frameworks is given by
%extensions: an extension is a set of arguments which, intuitively,
%represents a \emph{coherent position} able to withstand conflicts with
%other arguments. Different notions of extensions exist that reflect
%different understandings of ``withstanding conflicts.'' 

%In this paper, we
%focus on one of these semantics and present and prove the correctness
%of a novel MaxSAT-based approach for the computation of preferred
%extensions in abstract argumentation. Decision and enumeration problems
%associated to this semantics --- widely employed in a variety of real-world
%applications --- have been proved to be intractable. An extensive
%experimental analysis shows that the proposed approach is, on average,
%better than existing purpose-built algorithms and that it also significantly
%outperforms the most advanced parallel solver, despite being sequential.
\end{abstract}

\section{Introduction}
In Artificial Intelligence, the task of set optimization, in the sense of finding a set that is minimal or maximal with respect to set inclusion, frequently occurs. There are famous examples such as Circumscription \cite{McCarthy1980} or Model-based Diagnosis that involve set minimization. Computing preferred extensions of abstract argumentation frameworks \cite{dung1995} is an example that involves set maximization.

Often, set optimization is an element that creates difficulties in implementation and representation. For example, McCarthy had to resort to Second-order logic for defining Circumscription of First-order theories \cite{McCarthy1980}. Also for computing preferred extensions, relatively sophisticated techniques are required, for instance QBFs rather than propositional formulas \cite{Arieli2013}.

There is another notion of set optimization, finding a set that has minimal or maximal cardinality, which has more readily available system support nowadays. The most prominent examples of languages and systems that support cardinality optimization are MaxSAT, Constraint Programming, and Answer Set Programming (ASP).

In this paper, we show how set optima can be computed by a general algorithm that employs cardinality optimizing subroutines, provided that the underlying languages allow for expressing simple constraints. 
Algorithm MCSes in Figure~2 of \cite{liffiton2008algorithms}, which computes Minimal Correction Sets\footnote{It is worth noticing that algorithms exploiting minimal correction sets have been proposed for computing argumentation semantics extensions, in particular for semi-stable, ideal, and eager semantics \cite{Wallner2013}, but not for preferred semantics which is our main test-case in this paper.} of a propositional formula, bears a few similarities to our algorithms. However, it is also different in several respects, most notably it is formulated for solving one particular problem only, assumes propositional formulas as the representation formalism, and does not employ a cardinality optimization oracle explicitly. To the best of our knowledge a general method applicable in a variety of representation formalisms and for unspecific set optimization problems settings has not been proposed in the literature before. We develop an instantiation of the general method for ASP, and show that it is suitable. In~\cite{fabe-etal-2016-ecai}, we also present an instantiation using MaxSAT.

There are two more recent software tools that also support computing set optimization problems when the underlying language is ASP: {\bf asprin} \cite{asprin} and {\bf D-FLAT\textasciicircum 2} \cite{dflat}. The scope of the tool {\bf asprin} is actually reasoning with preferences, but as a special case one can express preferences such that only set optimal solutions remain. The required preferences come with the predefined library of {\bf asprin}. The underlying algorithms of {\bf asprin} are very different from those presented in this paper. {\bf D-FLAT\textasciicircum 2} builds on dynamic programming and exploits tree decomposition in order to solve set optimization problems and is therefore also very different from the method that we will present in this paper. In the realm of ASP, the tool {\bf metasp} \cite{Gebseretal2011} can be seen as a predecessor of {\bf asprin}, which does not seem to be maintained any longer. It relies on reification of rules and exploits a programming pattern known as saturation for set optimization, which is also very different from the method described in this paper.

We then turn our attention to a showcase application, computing preferred extensions of abstract argumentation frameworks. Dung's theory of abstract argumentation \cite{dung1995} is a unifying framework
able to encompass a large variety of specific formalisms in the areas of
nonmonotonic reasoning, logic programming and computational argumentation.
It is based on the notion of argumentation framework (\AFname),
consisting of a set of \emph{arguments} and a binary \emph{attack} relation between them.
Arguments can thus be represented by nodes of a directed graph, and attacks by arcs. The nature of arguments is left unspecified: it can be anything from logical statements to informal natural language text. For instance, \cite{Toniolo2015} shows how argumentation can be efficiently used for supporting critical thinking and intelligence analysis in military-sensitive contexts.

%{\bf MAURO: examples? something catching?}
Different \emph{argumentation semantics} declare the criteria to determine
which arguments emerge as ``justified'' among conflicting ones,
by identifying a number of \emph{extensions}, \ie{} sets of arguments that can \virg{survive the conflict together}.
In \cite{dung1995} four \virg{traditional} semantics were introduced, namely \emph{complete}, \emph{grounded}, \emph{stable}, and \emph{preferred} semantics. 
For a complete overview of subsequently proposed alternative semantics, the interested reader is referred to \cite{KER2011}.

The main computational problems in abstract argumentation
include \emph{decision}---\eg{} determine if an argument is in all the extensions prescribed by a semantics---and \emph{construction} problems,
and turn out to be computationally intractable for most argumentation semantics \cite{dw:2009}.
In this paper we focus on the \emph{extension enumeration} problem,
\ie{} constructing \emph{all} extensions for a given \AFname: its solution  provides complete information about the justification status of arguments
and allows for solving the other problems as well.

Our general method allowed for the definition and implementation of two novel algorithms for enumerating preferred extensions: \asp{} (based on an ASP solver) and \maxsat{} (based on a MaxSAT solver). The former is described in this paper and available on \url{http://www.wfaber.com/software/prefASP/}, the latter in~\cite{fabe-etal-2016-ecai}. Both are evaluated using benchmarks from the International Competition on Computational Models of Argumentation (ICCMA2015). We report on a variety of experiments: the first focuses on \asp\ and starts with comparing the use of different solver configurations for \asp{}, followed by a comparison of \asp{} to {\bf asprin} and {\bf D-FLAT\textasciicircum 2}, and eventually comparing \asp{} to the dedicated argumentation solver {\bf ASPARTIX-V}. Eventually we compare \asp{} and \maxsat{} to the ICCMA2015 competition winner \winner{}. The experiments show that despite their conceptual simplicity, our software tools are competitive with the best available ones.

% \begin{itemize}
% %\item Importance / Interestingness of Argumentation
% %\item Complexity and importance of preferred
% % \item A few efficient techniques proposed. Some of them translate the AF in a different encodings, for exploiting existing algorithms: SAT, ASP, CSP.
% % \item Preferred are indeed a maximisation problem, here we exploit the naturally related approach of MaxSAT
% % \item Large experimental analysis shows that the encodings are significnalty smaller than SAT ones, and performance are better on some sets.
% \end{itemize}

% In this paper we assume that the reader is familiar with MaxSAT \cite{li2009maxsat}.

%{\bf RELATE OUR WORK WITH \cite{liffiton2008algorithms} --- idea is indeed very similar, even if our algorithms are formulated in a more general setting. --- (and \cite{morgado2012maxsat}??). shall we also discuss dflat \cite{dflat} and asprin \cite{asprin}? -- relazione con \cite{liffiton2008algorithms} e' discussa nel 3o paragrafo.}

\section{Abstract Methodology} \label{sec:meth-abstract}

The proposed methodology applies to a variety of knowledge representation formalisms, we therefore consider an abstract setting. We define a knowledge base $\K$ to be associated with a set $\sol{\K}$ of solutions, and we assume that each $s \in \sol{\K}$ is a set. We also use a set restriction operator $\restrict{}{O}$ such that $\restrict{s}{O} = s \cap O$, the idea being that $\restrict{s}{O}$ identifies the solution elements that are relevant for optimization. We also assume a composition operator $\K_1 \compo \K_2$ to be present that allows to compose two knowledge bases $\K_1$ and $\K_2$ (intended as merging two knowledge bases; for bases represented as sets, $\compo$ will usually be $\cup$). We next define a few optimization criteria for solutions of knowledge bases.

\begin{definition}
Let $\K$ be a knowledge base and $R$ be a set (of elements occuring in solutions of $\K$). Define:
\begin{align*}
\Setmax{R}{\K} & = \{s \mid s \in \sol{\K}, \nexists s' \in \sol{\K}: \restrict{s'}{R} \supset \restrict{s}{R}\}\\
\Setmin{R}{\K} & = \{s \mid s \in \sol{\K}, \nexists s' \in \sol{\K}: \restrict{s'}{R} \subset \restrict{s}{R}\}\\
\Cardmax{R}{\K} & = \{s \mid s \in \sol{\K}, \nexists s' \in \sol{\K}: |\restrict{s'}{R}| > |\restrict{s}{R}|\}\\
\Cardmin{R}{\K} & = \{s \mid s \in \sol{\K}, \nexists s' \in \sol{\K}: |\restrict{s'}{R}| < |\restrict{s}{R}|\}
\end{align*}
\end{definition}

While $\Setmin{R}{\K}$ and $\Setmax{R}{\K}$ occur in diverse applications of knowledge representation and reasoning, it often happens that the computational complexity of these tasks increases (under standard assumptions) compared to $\Cardmin{R}{\K}$ and $\Cardmax{R}{\K}$. For example, deciding whether $\sol{\K} = \emptyset$ is \CONP{}-complete (in the size of $\K$) if $\K$ is represented using a propositional formula and $\sol{\K}$ is the set of satisfying assignments, where each assignment is represented as the set of true variables. In this setting, computing $\Setmax{R}{\K}$ is then \SigmaP{2}-hard, as showing the optimality of a solution may require exponentially many \CONP{} checks, while $\Cardmax{R}{\K}$ is in \DeltaP{2}, requiring at most a polynomial number of \CONP{} checks.
%or even only in \ThetaP{2}, requiring at most a logarithmic number of \CONP{} checks.
Note that this does not necessarily have practical consequences, because at the moment all known algorithms to solve these problems require at least exponential time.

There are also representational repercussions. Still assuming $\K$ to be a propositional formula, one cannot find a propositional formula of polynomial size that encodes any of $\Setmin{R}{\K}$, 
$\Setmax{R}{\K}$, $\Cardmax{R}{\K}$, and $\Cardmin{R}{\K}$ (if $NP\neq\SigmaP{2}$, which is currently unknown, but often conjectured).
If $\K$ has been modelled using ASP, it is easy to encode $\Cardmax{R}{\K}$ and $\Cardmin{R}{\K}$ because of the availability of weak constraints (or optimization constructs). In fact, one can use ASP also for encoding $\Setmin{R}{\K}$ and
$\Setmax{R}{\K}$, because ASP can express all problems in $\SigmaP{2}$. We will discuss this further in Section~\ref{sec:meth-maxsat-asp}.

%{\bf: WOLFGANG: sistemare richiesta rev1 -> your assertion on the number (polynomial vs exponential) of coNP checks is very important to justify your contribution. It deserves some extra explanation. }
%piu sotto

In this paper, we relate $\Setmax{R}{\K}$ to $\Cardmax{R}{\K}$ (and $\Setmin{R}{\K}$ to $\Cardmin{R}{\K}$). We first observe that each cardinality optimal solution is also subset optimal.
\begin{observation}\label{obs:cardoptissetopt}
For any knowledge base $\K$ and set $R$, $\Cardmax{R}{\K} \subseteq \Setmax{R}{\K}$ and $\Cardmin{R}{\K} \subseteq \Setmin{R}{\K}$.
\end{observation}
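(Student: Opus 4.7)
The plan is to prove the inclusion by contraposition, using the elementary fact that for finite sets a strict superset relation implies a strict inequality in cardinality. I would argue both inclusions in parallel since they are symmetric.

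First, I would take any $s \in \Cardmax{R}{\K}$ and suppose, for contradiction, that $s \notin \Setmax{R}{\K}$. By unfolding the definition of $\Setmax{R}{\K}$, the assumption yields some $s' \in \sol{\K}$ with $\restrict{s'}{R} \supset \restrict{s}{R}$, i.e., $\restrict{s}{R} \subsetneq \restrict{s'}{R}$. Assuming $\restrict{s}{R}$ is finite (which is the implicit setting, since cardinality comparison is being applied to these sets), strict set inclusion implies $|\restrict{s}{R}| < |\restrict{s'}{R}|$. This witnesses the existence of a solution whose restriction to $R$ has strictly larger cardinality than that of $s$, contradicting $s \in \Cardmax{R}{\K}$. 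Hence $s \in \Setmax{R}{\K}$, proving $\Cardmax{R}{\K} \subseteq \Setmax{R}{\K}$.

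The second inclusion is entirely symmetric: given $s \in \Cardmin{R}{\K}$, a supposed witness $s'$ with $\restrict{s'}{R} \subsetneq \restrict{s}{R}$ would yield $|\restrict{s'}{R}| < |\restrict{s}{R}|$, again contradicting cardinality-minimality. Thus $\Cardmin{R}{\K} \subseteq \Setmin{R}{\K}$.

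There is no real obstacle here; the only point worth mentioning explicitly is the finiteness assumption on $\restrict{s}{R}$, which is needed to pass from strict set inclusion to strict cardinality inequality. In the abstract framework set up earlier this is implicit, since $\Cardmax{R}{\K}$ and $\Cardmin{R}{\K}$ compare sizes via $|\cdot|$, and in the intended applications (MaxSAT, ASP, argumentation) all solutions and the relevant set $R$ are finite.
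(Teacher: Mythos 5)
Your proposal is correct and follows essentially the same argument as the paper: assume $s \in \Cardmax{R}{\K}$ fails subset-maximality, obtain a witness $s'$ with $\restrict{s'}{R} \supset \restrict{s}{R}$, and conclude $|\restrict{s'}{R}| > |\restrict{s}{R}|$, contradicting cardinality-maximality (and symmetrically for minimization). Your explicit note about the finiteness needed to pass from strict inclusion to strict cardinality inequality is a reasonable addition the paper leaves implicit.
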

This observation holds because if any $s \in \Cardmax{R}{\K}$ were not in $\Setmax{R}{\K}$, then there would be some $s' \in \sol{\K}$ such that $\restrict{s'}{R} \supset \restrict{s}{R}$ and clearly $|\restrict{s'}{R}| > |\restrict{s}{R}|$ then holds (and symmetrically for minimization).

This implies that when the task is to compute one subset optimal solution, one can instead safely compute one cardinality optimal solution. When, however, the computational task involves an enumeration of all subset optimal solutions, one is faced with incompleteness, as not all subset optima are cardinality optimal.

\begin{example}\label{ex:setmaxcardmax}
Let $\K_1$ be such that $\sol{\K_1} = \{\{a,b\}, \{b\}, \{c\}\}$ and let $R_1 = \{a,b,c\}$. Then $\Setmax{R_1}{\K_1} = \{\{a,b\},\{c\}\}$ while $\Cardmax{R_1}{\K_1} = \{\{a,b\}\}$.
\end{example}

This can be overcome by an iterative approach, in which first cardinality optimal solutions are computed. In the next stage, the knowledge base is extended in a way that it no longer admits the solutions already found or any subsets (for maximization) or supersets (for minimization) thereof. 

\begin{definition}
Given a knowledge base $\K$, a set $R$, and a set $S \subseteq \sol{\K}$, let $\nosubset{\K,R,S}$ denote a knowledge base such that
$$\sol{\K \compo \nosubset{\K,R,S}} = \sol{\K} \setminus \{s'\mid \restrict{s'}{R} \subseteq \restrict{s}{R} \land s \in S\}.$$
Symmetrically, let $\nosupset{\K,R,S}$ denote a knowledge base such that
$$\sol{\K \compo \nosupset{\K,R,S}} = \sol{\K} \setminus \{s'\mid \restrict{s'}{R} \supseteq \restrict{s}{R} \land s \in S\}.$$
\end{definition}

It depends on the formalism used for the knowledge base, whether $\nosubset{\K,R,S}$ and $\nosupset{\K,R,S}$ can be created, and in particular whether they can be represented in a concise way. It also depends on the formalism whether there is a uniform way of encoding $\nosubset{\K,R,S}$ and $\nosupset{\K,R,S}$, or whether one has to rely on a representation that depends on the structure of $\K$.

The iterative approach is formalized for subset maximality in Algorithm~\ref{alg:setmaxbycardmax}, it is easily adapted to subset minimality.
% and for subset minimality in Algorithm~\ref{alg:setminbycardmin}.
Note that practical algorithms will usually not collect all solutions in the output because of space considerations, but rather output them immediately as they are computed.

\begin{algorithm}[t]
  \caption{Enumerating $\Setmax{R}{\K}$ by means of $\Cardmax{R}{\K}$}\label{alg:setmaxbycardmax}
   \begin{algorithmic}[1]
    \STATE \textbf{Input:} $\K, R$
    \STATE \textbf{Output:} $\Setmax{R}{\K}$
    \STATE $\K_i~:=~\K$
    \STATE $S~:=~\emptyset$
    \STATE $S_i~:=~\Cardmax{R}{\K_i}$
    \WHILE{$S_i~!=~\emptyset$}
    \STATE $S~:=~S \cup S_i$
    \STATE $\K_i~:=~\K_i \compo \nosubset{\K_i,R,S_i}$
    \STATE $S_i~:=~\Cardmax{R}{\K_i}$
    \ENDWHILE
    \STATE \textbf{return} $S$ 
   \end{algorithmic}
\end{algorithm}

\nop{
\begin{algorithm}[t]
  \caption{Enumerating $\Setmin{R}{\K}$ by means of $\Cardmin{R}{\K}$}\label{alg:setminbycardmin}
   \begin{algorithmic}[1]
    \STATE \textbf{Input:} $\K, R$
    \STATE \textbf{Output:} $\Setmin{R}{\K}$
    \STATE $\K_i~:=~\K$
    \STATE $S~:=~\emptyset$
    \STATE $S_i~:=~\Cardmin{R}{\K_i}$
    \WHILE{$S_i~!=~\emptyset$}
    \STATE $S~:=~S \cup S_i$
    \STATE $\K_i~:=~\K_i \compo \nosupset{\K_i,R,S_i}$
    \STATE $S_i~:=~\Cardmin{R}{\K_i}$
    \ENDWHILE
    \STATE \textbf{return} $S$ 
   \end{algorithmic}
\end{algorithm}
}

\begin{theorem}
For a knowledge base $\K$ and set $R$, Algorithm~\ref{alg:setmaxbycardmax} computes $\Setmax{R}{\K}$.%, and Algorithm~\ref{alg:setminbycardmin} computes $\Setmin{R}{\K}$.
\end{theorem}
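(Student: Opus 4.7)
The plan is to verify three properties of the while-loop: (i) soundness---every element added to $S$ belongs to $\Setmax{R}{\K}$; (ii) completeness---every element of $\Setmax{R}{\K}$ is eventually added to $S$; and (iii) termination. I would index the iterations by letting $\K_0 = \K$ and $\K_{j+1} = \K_j \compo \nosubset{\K_j, R, S_j}$, where $S_j = \Cardmax{R}{\K_j}$. The key loop invariant, immediate from the defining equation of $\nosubset{\cdot}$, is the monotonicity $\sol{\K_{j+1}} \subseteq \sol{\K_j} \subseteq \sol{\K}$.

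For soundness, I would fix $s \in S_j$ and suppose towards a contradiction that some $s' \in \sol{\K}$ satisfies $\restrict{s'}{R} \supsetneq \restrict{s}{R}$. If $s' \in \sol{\K_j}$, then $|\restrict{s'}{R}| > |\restrict{s}{R}|$ directly contradicts $s \in \Cardmax{R}{\K_j}$ via Observation~\ref{obs:cardoptissetopt}'s underlying idea. Otherwise, let $k < j$ be the smallest index at which $s'$ was dropped; by the definition of $\nosubset{\cdot}$ there is $s'' \in S_k$ with $\restrict{s''}{R} \supseteq \restrict{s'}{R} \supsetneq \restrict{s}{R}$. But monotonicity gives $s \in \sol{\K_k}$, and since $\restrict{s}{R} \subseteq \restrict{s''}{R}$, the element $s$ itself would have been eliminated by $\nosubset{\K_k, R, S_k}$, contradicting $s \in \sol{\K_j}$.

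For completeness, I would take any $s^* \in \Setmax{R}{\K}$. Since termination forces $\sol{\K_i} = \emptyset$ eventually, while $s^* \in \sol{\K_0}$, there is a first iteration $j$ at which $s^*$ is removed, which by the definition of $\nosubset{\cdot}$ yields some $s'' \in S_j$ with $\restrict{s^*}{R} \subseteq \restrict{s''}{R}$. Strict containment is impossible, because $s'' \in \sol{\K}$ together with $\restrict{s''}{R} \supsetneq \restrict{s^*}{R}$ would contradict $s^* \in \Setmax{R}{\K}$. Hence $|\restrict{s^*}{R}| = |\restrict{s''}{R}|$, and since $s^* \in \sol{\K_j}$, we obtain $s^* \in \Cardmax{R}{\K_j} = S_j \subseteq S$.

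For termination, I would observe that whenever $S_i \neq \emptyset$ every $s \in S_i \subseteq \sol{\K_i}$ is eliminated by $\nosubset{\K_i, R, S_i}$ (since $\restrict{s}{R} \subseteq \restrict{s}{R}$), so $\sol{\K_i}$ strictly shrinks; under the standard finiteness assumption on $\sol{\K}$ the loop halts in at most $|\sol{\K}|$ iterations. The main obstacle is the soundness step, specifically ruling out that a hypothetical strict super-solution $s'$ of a returned $s$ was silently discarded in an earlier round: the resolution is to trace the chain $\restrict{s}{R} \subsetneq \restrict{s'}{R} \subseteq \restrict{s''}{R}$ back to the round that eliminated $s'$ and observe that the same removal step would have eliminated $s$ as well.
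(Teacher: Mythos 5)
Your proof is correct and follows essentially the same decomposition as the paper's sketch (soundness of each $S_j$, completeness via eventual removal of every element of $\Setmax{R}{\K}$, and termination from finiteness of $\sol{\K}$). In fact your soundness argument is more careful than the paper's: you explicitly handle the case where the hypothetical strict super-solution $s'$ was already eliminated in an earlier round, tracing it to an $s''\in S_k$ whose restriction would also have forced the removal of $s$ itself---a step the published sketch leaves implicit.
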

\begin{proof}[Sketch]
%We show that Algorithm~\ref{alg:setmaxbycardmax} computes $\Setmax{R}{\K}$.% Symmetric arguments prove that Algorithm~\ref{alg:setminbycardmin} computes $\Setmin{R}{\K}$.
We first observe that each assignment of variable $S_i$ contains only elements of $\Setmax{R}{\K}$. When variable $S_i$ is first initialized in line~5 of Algorithm~\ref{alg:setmaxbycardmax}, Observation~\ref{obs:cardoptissetopt} guarantees the claim. For each later assignment, by construction only $s \in \sol{\K}$ are assigned, and any such $s$ is such that $\nexists s' \in \sol{\K}: \restrict{s'}{R} \supset \restrict{s}{R}$  (otherwise $|\restrict{s'}{R}| > |\restrict{s}{R}|$ would hold). It is also clear that the algorithm terminates (if $\sol{\K}$ is finite).

Now observe that each $s \in \Setmax{R}{\K}$ is assigned once to $S_i$ in Algorithm~\ref{alg:setmaxbycardmax}. Indeed, the first assignment contains all elements in $\Setmax{R}{\K}$ that are of maximum cardinality, the next iteration contains all elements in $\Setmax{R}{\K}$ of the next-highest cardinality, and so forth down to the elements of $\Setmax{R}{\K}$ of least cardinality in the last assignment. In this way, all elements of $\Setmax{R}{\K}$ will be contained in $S$ when Algorithm~\ref{alg:setmaxbycardmax} terminates.
\end{proof}

It should be pointed out that Algorithms~\ref{alg:setmaxbycardmax} %and \ref{alg:setminbycardmin}
also work when instead of  $\Cardmax{R}{\K_i}$ (or $\Cardmin{R}{\K_i}$) any non-empty subset thereof is assigned to $S_i$ in lines 5 and 9.

Let us note that the number of subcalls to $\Cardmax{R}{\K_i}$ (resp., $\Cardmin{R}{\K_i}$) is at most $|\Setmax{R}{\K}|$ (resp., $|\Setmin{R}{\K}|$). The cardinality of these sets can be exponential in $\K$ in the worst case. That also means that in the worst case an exponential number of knowledge bases \nosupset{\K_i,R,S_i} (or \nosubset{\K_i,R,S_i}) are composed to $\K$, which could lead to a use of exponential space. Note however, that this only occurs if there is an exponential number of solutions to be generated by the algorithm.
This only occurs if $\restrict{s_1}{R} \nsubseteq \restrict{s_2}{R}$ and $\restrict{s_2}{R} \nsubseteq \restrict{s_1}{R}$ holds for almost all solutions $s_1$ and $s_2$ of $\sol{\K}$.

Note that there is also a contrast to more traditional algorithms that invoke a $\CONP$ oracle call for each $s \in \sol{\K}$, especially if they run a test on each subset of a found solution. In that setting, the number of subcalls that take exponential time will usually be much greater than $|\Setmin{R}{\K}|$ (resp., $|\Setmin{R}{\K}|$). We view this feature of our algorithm as one of the main advantages over more traditional methods.

Algorithm~\ref{alg:setmaxbycardmax} bears some similarities to Algorithm MCSes in Figure~2 of \cite{liffiton2008algorithms}. Algorithm MCSes computes Minimal Correction Sets of a propositional formula, and it solves a much more specific problem and assumes a specific knowledge representation formalism. %, it is also different in that it does not use a MaxSAT solver as an oracle. 
In fact, it iteratively increases the cardinality of the (relevant portion of the) solution to be computed and enforces the cardinality by means of formulas thus imitating the behaviour of a MaxSAT algorithm.
%; but this can be viewed as one particular implementation of MaxSAT. 
The clauses that Algorithm \mbox{MCSes} adds follow the same idea of $\nosubset{\K_i,R,S_i}$.%, and correspond to $\nosubsetsat{\K_i,R,S_i}$ that will be defined in Section~\ref{sec:meth-maxsat}.

\section{Concretizations Using ASP} \label{sec:meth-maxsat-asp}

We now show how to instantiate the abstract method described in Section~\ref{sec:meth-abstract} using ASP. In Answer Set Programming (ASP) one asks for the answer sets (often also called stable models) of a logic program. The full language specification can be found at \url{https://www.mat.unical.it/aspcomp2013/ASPStandardization}, below we provide a brief overview of the concepts relevant to this work.

The basic constructs in ASP logic programs are of the form
$$\mathtt{h_1\ |\ \ldots\ |\ h_k\ :-\ b_1,\ \ldots,\ b_m,\ not\ b_{m+1},\ \ldots,\ not\ b_n.} $$
where $\mathtt{0 \leq k}$, $\mathtt{0 \leq m \leq n}$ and the $\mathtt{h_i}$ and $\mathtt{b_j}$ are function-free first-order atoms. When $\mathtt{k} > 0$, it is called a rule, otherwise a constraint. If $\mathtt{k} = 1$ and $\mathtt{m} = \mathtt{n} = 0$, the rule is called a fact. The part left of the construct $\mathtt{:-}$ is called head, the part right of it is called body. Sets of rules and constraints are called programs.

Programs with variables are thought of as shorthand for their ground (variable-free) versions with respect to the Herbrand universe of the program. Answer sets are defined on ground programs: they are Herbrand models of the program, which satisfy an additional stability condition. In the following we will assume $\proplang$ to be the set of all ground atoms.

The language of ASP consists of quite a lot more constructs. One relevant for this paper is the weak constraint, which takes the form
$$\mathtt{:\sim\ b_1,\ \ldots,\ b_m,\ not\ b_{m+1},\ \ldots,\ not\ b_n.} $$
An interpretation that satisfies all literals to the right of $\mathtt{:\sim}$ will incur a (uniform) penalty. Answer sets of programs with weak constraints are then those answer sets of the weak-constraint-free portion that minimize the penalties incurred by weak constraints.

For ASP, the terminology of Section~\ref{sec:meth-abstract} is instantiated as follows: $\K$ in this setting is a weak-constraint-free program, and $\sol{\K}$ is the set of its answer sets. The operation $\K_1 \compo \K_2$ simply is the set union of the two programs $\K_1$ and $\K_2$.

It is possible to encode $\Cardmax{R}{\K}$ and $\Cardmin{R}{\K}$ by means of weak constraints.%, similar to the MaxSAT approach described earlier. 

\begin{definition}
Given a set of clauses $\K$ and $R \subset \proplang$,  we define 
\[
\Cardmaxasp{R}{\K} = \K \cup \{ \mathtt{:\sim\ not}\ r. \mid r \in R\}
\]
and in a similar way
\[
\Cardminasp{R}{\K} = \K \cup \{\mathtt{:\sim}\ r. \mid r \in R\}~~.
\]
% consists of hard clauses $\K$ and soft clauses $\{\neg r\}$ (unit clauses) for each $r\in R$, again all with the same weight.
\end{definition}

It is easy to verify that $\Cardmax{R}{\K}$ corresponds to the answer sets of \Cardmaxasp{R}{\K} and $\Cardmin{R}{\K}$ corresponds to the answer sets of \Cardminasp{R}{\K}.

ASP also allows for encoding $\Setmax{R}{\K}$ and $\Setmin{R}{\K}$, but this requires rather involved, and often ad-hoc programs. A general approach has been presented in \cite{Gebseretal2011}, but it relies on reification techniques, which is often detrimental for performance.

Let us now consider how to obtain $\nosubset{\K,R,S}$ and $\nosupset{\K,R,S}$ in ASP. For $\nosubset{\K,R,S}$, one requires for each solution in $S$ that not all elements of $R$ outside that solution should be false. This inhibits the solution itself and any subset (restricted to $R$) of it.
For $\nosupset{\K,R,S}$, we require for each solution in $S$ that not all of its elements in $R$ should be true. In this way the solution itself and any superset is inhibited.

\begin{definition}
Given a set of clauses $\K$, $R \subset \proplang$,  and $S \subseteq \sol{\K}$, let
\[
\nosubsetasp{\K,R,S} = \{\mathtt{:-\ not\ a_1,\ \ldots,\ not\ a_n.} \mid s \in S, R \setminus s = \{\mathtt{a_1},\ldots,\mathtt{a_n}\} \}
\]
and in a similar way
\[
\nosupsetasp{\K,R,S} = \{\mathtt{:-\  a_1,\ \ldots,\ a_n.} \mid s \in S, R \cap s = \{\mathtt{a_1},\ldots,\mathtt{a_n}\} \}~~.
\]
\end{definition}

Again, it is easy to verify that \nosubsetasp{\K,R,S} and \nosupsetasp{\K,R,S} restrict the answer sets in the way required by \nosubset{\K,R,S} and \nosupset{\K,R,S}.

\section{Methodology Showcase: Abstract Argumentation}\label{approach}

In this section we show how to use the proposed methodology for enumerating all preferred extensions of abstract argumentation frameworks. After a short background on abstract argumentation, we introduce an ASP-based solver (\asp{}) that employs the methods described in Section~\ref{sec:meth-maxsat-asp}.

\subsection{Background on Abstract Argumentation}

An argumentation framework \cite{dung1995} consists of a set of arguments\footnote{In this paper we consider only \emph{finite} sets of arguments: see \cite{Baroni2013} for a discussion on infinite sets of arguments.} and a binary attack relation between them.

\begin{definition}
An \emph{argumentation framework} (\AFname) is a pair $\anAFsymbol = \anAF$
where $\setargs$ is a set of arguments and $\setattacks \subseteq \setargs \times \setargs$.
We say that \argb{} \emph{attacks} \arga{} iff $\tuple{\argb,\arga} \in \setattacks$, also denoted as $\attacks{\argb}{\arga}$.
The set of attackers of an argument $\arga$ will be denoted as
$\attackers{\arga} \triangleq \set{\argb : \attacks{\argb}{\arga}}$,
the set of arguments attacked by $\arga$ will be denoted as
$\attacked{\arga} \triangleq \set{\argb : \attacks{\arga}{\argb}}$.
% We also extend these notations to sets of arguments, \ie{} given $E \subseteq \setargs$,
% $\attackers{E} \triangleq \set{\argb \mid \exists \arga \in E, \attacks{\argb}{\arga}}$
% and
% $\attacked{E} \triangleq \set{\argb \mid \exists \arga \in E, \attacks{\arga}{\argb}}$.
\end{definition}

% An argument $\arga$ without attackers, \ie{} such that $\attackers{\arga} = \emptyset$,
% is said \emph{initial}. Moreover, 

Each \AFname{} has an associated directed graph where the vertices are the arguments, and the edges are the attacks.

As an intuitive example from \cite{Besnard2014a}, let $\arga$ be the argument ``Patient has hypertension so prescribe diuretics;'' $\argb$: ``Patient has hypertension so prescribe betablockers;'' and $\argc$: ``Patient has emphysema which is a contraindication for betablockers.''
Intuitively, assuming that only one treatment is possible at the very same time, $\arga$ attacks $\argb$ and vice versa, while $\argc$ suggests that $\argb$ should not be the case ($\argc$ attacks $\argb$).
Therefore, let $\AFsymbol{M} = \AF{M}$ such that, $\setargs_M = \set{\arga, \argb, \argc}$ and $\setattacks_M = \set{\tuple{\argc, \argb}, \tuple{\argb, \arga}, \tuple{\arga, \argb}}$. $\AFsymbol{M}$ is depicted in Fig. \ref{fig:ex}.

\begin{figure}[t]
  \centering
      \begin{tikzpicture}[->,>=stealth',shorten >=1pt,auto,node
      distance=1.2cm, thick,main node/.style={draw=none,fill=none}]
      
      \node[main node] (1) {$\argc$};
      \node[main node] (2) [right of=1] {$\argb$};
      \node[main node] (3) [right of=2] {$\arga$};

      \tikzset{edge/.style = {->,> = latex'}} 
      
      \draw[edge] (1) to (2);
      \draw[edge] (2) to[bend right] (3);
      \draw[edge] (3) to[bend right] (2);
    \end{tikzpicture}

  \caption{The \AFname{} $\AFsymbol{M}$ for the hypertension problem}
  \label{fig:ex}
\end{figure}

The basic properties of \dungconffree ness, acceptability, and admissibility of a set of arguments
are fundamental for the definition of argumentation semantics.

\begin{definition}
\label{def:recall}
Given an $\AFname$ $\anAFsymbol = \anAF$:
\begin{itemize}
  \item a set $\aset \subseteq \setargs$ is a \emph{\dungconffree} set of $\anAFsymbol$
        if $\nexists~ \arga, \argb \in \aset$ s.t. $\attacks{\arga}{\argb}$;
  \item an argument $\arga \in \setargs$ is \emph{\dungacceptable} with respect to a set $\aset \subseteq \setargs$ of  $\anAFsymbol$
        if $\forall \argb \in \setargs$ s.t. $\attacks{\argb}{\arga}$,
        $\exists~ \argc \in \aset$ s.t. $\attacks{\argc}{\argb}$;
  \item the function $\carfun{\anAFsymbol}: 2^{\setargs} \rightarrow 2^{\setargs}$ such that
        $\carfun{\anAFsymbol}(\aset) = \set{\arga \mid \arga \mbox{ is } \mbox{\dungacceptable} \mbox{ w.r.t. } \aset}$
        is called the \emph{\dungcharacteristic{} function} of \anAFsymbol;

  \item a set $\aset \subseteq \setargs$ is an \emph{\dungadmissible{} set} of $\anAFsymbol$
        if $\aset$ is a \dungconffree{} set of $\anAFsymbol$ and every element of $\aset$
        is \dungacceptable{} with respect to $\aset$ of $\anAFsymbol$.
\end{itemize}
\end{definition}

In the AF $\AFsymbol{M}$ of Fig. \ref{fig:ex}, $\set{\arga}$ is an admissible set because it is \dungconffree{} (there is no such attack $\tuple{\arga, \arga}$) and each element of the set (i.e. $\arga$) is defended against the attack it receives (i.e. $\arga$ is attacked by $\argb$, but, in turn, $\argb$ is attacked by $\arga$).

An argumentation semantics $\gensem$ prescribes for any \AFname{} $\anAFsymbol$ a set of \emph{extensions}, denoted as $\setgenext{\gensem}{\anAFsymbol}$, namely a set of sets of arguments satisfying the conditions dictated by $\gensem$.
Here we recall the definition of the grounded semantics, denoted as $\GR$, and of the preferred semantics, denoted as $\PR$.

\begin{definition}
\label{def_sem_recall}
Given an $\AFname$ $\anAFsymbol = \anAF$:

\begin{itemize}
\item a set $\aset \subseteq \setargs$ is the \emph{\dunggrounded{} extension}  of $\anAFsymbol$, 
        if $\aset$ is the least (w.r.t. set inclusion) fixed point of the \dungcharacteristic{} function $\carfun{\anAFsymbol}$;        

\item a set $\aset \subseteq \setargs$ is 
 a \emph{\dungpreferred{} extension} of $\anAFsymbol$,
             \ie{} $\aset \in \setgenext{\PR}{\anAFsymbol}$,
              if $\aset$ is a maximal (w.r.t. \ensuremath{\subseteq}) \dungadmissible{} set of $\anAFsymbol$.

\end{itemize}
\end{definition}

While $\set{\arga}$ is an \dungadmissible{} set for $\AFsymbol{M}$, it is not a \dungpreferred{} extension. In fact, $\set{\argc, \arga}$ is also an admissible set which contains $\set{\arga}$. Since there are no admissible supersets of $\set{\argc, \arga}$, it is therefore maximal and thus a preferred extension, the only one for $\AFsymbol{M}$.

\subsection{Admissible Extensions in ASP}

For ASP, an encoding for admissible extensions is rather straightforward, see \cite{Egly2010,Charwat2015}.

\begin{definition}
\label{def:admasp}
  Given an $\AFname$ $\anAFsymbol = \anAF$,  for each $a \in \setargs$ a fact
  $\mathtt{arg(a).}$
  is created and for each $(a,b) \in \setattacks$ a fact
  $\mathtt{att(a,b).}$
  is created (this corresponds to the apx file format in the ICCMA competition). 
  Together with the program
  \begin{align*}
&\mathtt{in(X) :- not\ out(X), arg(X).}\\
&\mathtt{out(X) :- not\ in(X), arg(X).}\\
&\mathtt{:- in(X), in(Y), att(X,Y).}\\
&\mathtt{defeated(X) :- in(Y), att(Y,X).}\\
&\mathtt{not\_defended(X) :- att(Y,X), not\ defeated(Y).}\\
&\mathtt{:- in(X), not\_defended(X).}
\end{align*}
we form $\admaspenc{\anAFsymbol}$ and there is a one-to-one correspondence between answer sets of $\admaspenc{\anAFsymbol}$ and admissible extensions.
\end{definition}

%Different from the SAT encoding, the ASP encoding only changes facts for different AFs, the main program remains equal.

% \begin{itemize}
% \item Introduce algorithm
% \item Describe encoding
% \item Are we able to provide a theoretical demonstration of the fact that the proposed encoding finds all and only the preferred?
% \end{itemize}

\subsection{Preferred Extensions via Algorithm~\ref{alg:setmaxbycardmax}}

We can now use our methodology in order to step from admissible to preferred extensions. Indeed, if $\K$ encodes admissible extensions of an AF and $R$ is the language part encoding the extensions, then $\Setmax{R}{\K}$ encodes preferred extensions. We can then use Algorithm~\ref{alg:setmaxbycardmax} to compute them. This gives rise to two solvers, $\asp$ and $\maxsat$ (the latter being described in \cite{fabe-etal-2016-ecai}).

For $\asp$, given an $\AFname$ $\anAFsymbol = \anAF$, we use Algorithm~\ref{alg:setmaxbycardmax} with input $\K$ being $\admaspenc{\anAFsymbol}$ and input $R$ being $\{\mathtt{in(a)}\mid \mathtt{a} \in \setargs\}$, in the following referred to as $I(\setargs)$. In lines 5 and 9 of Algorithm~\ref{alg:setmaxbycardmax}, we use an ASP solver for obtaining all answer sets of $\Cardmaxasp{I(\setargs)}{\K_i}$. In line 8, $\nosubsetasp{\K_i,I(\setargs),S_i}$ is used.

Apart from the encodings and underlying solvers, there is also a difference in the fact that $\asp$ computes all cardinality-optimal solutions in one go, while $\maxsat$ computes one at a time.

\nop{
\subsection{Using MaxSAT for Computing Preferred Extensions}

The first approach we propose, called \maxsat, 
reduces the search for preferred extensions to a sequence of MaxSAT problems. 
Given a conjunctive normal form (CNF) boolean formula, composed by soft and hard clauses,
the MaxSAT problem is that of determining a variable assignment satisfying all hard clauses and maximizing the number
of satisfied soft clauses.
% is based on MaxSAT to identify the maximal admissible extensions, namely preferred extensions. Each step of the search process requires the solution of a MaxSAT problem. 
The algorithm \maxsat{} is based on the idea of encoding the constraints corresponding to cardinality-maximal admissible sets of an \AFname{} as a MaxSAT problem, and then iteratively producing and solving modified versions of the initial problem.
%in order to compute all preferred extensions. 
The solution of each MaxSAT problem corresponds to one preferred extension.

Formally, given an \AFname{} $\anAFsymbol = \anAF$ we are interested in identifying a CNF formula, composed by hard and soft clauses, such that each assignment satisfying all the hard clauses of the formula corresponds to an admissible set, and each assignment satisfying the hard clauses and maximising the number of satisfied soft clauses corresponds to a preferred extension. 

% Before describing the unweighted MaxSAT formula, we have to introduce some notation. Letting $k = \card{\setargs}$ we can identify each argument with an index in $\set{1, \ldots k}$ or, more precisely, we can define a bijection $\phi : \set{1, \ldots, k} \mapsto \setargs$ (the inverse map will be denoted as $\phi^{-1}$). $\phi$ will be called an indexing of $\setargs$ and the argument $\phi(i)$ will be sometimes referred to as argument $i$ for brevity. For each argument $i$ we define a Boolean variable $A_i$; when $A_i$ is true the corresponding argument $i$ is labelled \textin, false otherwise. It should be noted that when $A_i$ is false, this indicates that the argument $i$ is either \textout{} or \textundec{}. BY considering the $A_i$ variables, we can generate an unweighted MaxSAT formula as described in Definition \ref{def:enc}.

\begin{definition}
\label{def:enc}
Given an \AFname{} $\anAFsymbol = \anAF$, % with $\card{\setargs}=k$ and $\phi : \set{1, \ldots, k} \mapsto \setargs$ an indexing of $\setargs$, 
$\proplang$ a set of propositional variables and $\vfunsym: \setargs \to \proplang$,
the unweighted MaxSAT encoding for \dungpreferred{} semantics of $\anAFsymbol$, $\satproblem_{\anAFsymbol}$, is given by the conjunction of the hard clauses listed below:

\begin{equation}\label{adm:i}
\LAND_{\arga \in \setargs | \attackers{\arga} = \emptyset} \vfun{\arga}
\end{equation}

\begin{equation}\label{adm:ii}
\begin{split}
\LAND_{\arga \in \setargs | \attackers{\arga} \neq \emptyset} &
\left(~ \LAND_{\argb \in \attackers{\arga}} ( \lnot \vfun{\arga} \lor  \lnot \vfun{\argb})
\right)
%\right)
\end{split}
\end{equation}

\begin{equation}\label{adm:iii}
\begin{split}
\LAND_{\arga \in \setargs | \attackers{\arga} \neq \emptyset} &
\left(~ \LAND_{\argb \in \attackers{\arga}} 
\left( \lnot \vfun{\arga} \lor  
\left( \LOR_{\argc \in \attackers{\argb}} \vfun{\argc}
\right)
\right)
\right)
%\right)
\end{split}
\end{equation}

\noindent
and by the conjunction of the following soft clauses:

\begin{equation}\label{softpref}
\LAND_{\arga\in\setargs} \vfun{\arga}
\end{equation}

\end{definition}

The hard clauses of Definition \ref{def:enc} can be related to the definition of admissible sets (Def. \ref{def:recall}): 
clauses (\ref{adm:ii}) enforce the \dungconffree ness; 
clauses (\ref{adm:iii}) ensure that each argument in the admissible set is defended. 
Moreover, since we focus on enumerating preferred extensions which include all unattacked arguments,
clauses (\ref{adm:i}) exclude those admissible sets that do not include all of them.
%the empty set is admissible, but not a preferred extension.
% The unweighted MaxSAT encoding introduced in Definition \ref{def:enc} corresponds to ... {\bf MAURO: here refers to the admissible encoding, plus soft clauses for forcing maximisation}. Clauses (\ref{adm:i}) settle the case of unattacked arguments that must be labelled \textin. C
To formally express the correspondence between admissible sets and Definition \ref{def:enc}, let us consider a function $\ARGSTRUE$ that, given a variable assignment $\varass$, returns $\aset \subseteq \setargs$ such that $\arga \in \aset$ iff $\vfun{\arga} \equiv \top$ in $\varass$.
%We extend $\ARGSTRUE$ to formulas including hard and soft clauses, \ie{} 
Moreover, given a formula $f$ 
$\MODELS(f) \triangleq \set{\ARGSTRUE(\varass) \mid \varass \mbox{ is a model of the hard clauses of } f}$. 
Let $\setadmwin{\anAFsymbol}$ denote the set
$\set{S\mbox{ admissible in } \anAFsymbol \mid \forall \arga \in \setargs : \attackers{\arga} = \emptyset, \arga \in S}$.

%The following lemma proves the correspondence between the hard clauses of Definition \ref{def:enc} and the propositional encoding of admissible sets, Definition \ref{def:admprop}.

\begin{lemma}
\label{lemma:hard}
%  Given an  \AFname{} $\anAFsymbol = \anAF$, $(\ref{adm:i}) \land (\ref{adm:ii}) \land (\ref{adm:iii}) \equiv \admenc{\anAFsymbol} \setminus \{\emptyset\}$.
Given an  \AFname{} $\anAFsymbol = \anAF$, $\MODELS(\satproblem_{\anAFsymbol}) = \setadmwin{\anAFsymbol}$.

  \begin{proof}
    It is immediate to see by equivalences involving Boolean connectives that the conjunction of clauses (\ref{adm:ii}) and (\ref{adm:iii}) is equivalent to \admenc{\anAFsymbol} (Definition \ref{def:admprop}). 
Moreover, clauses (1) enforce inclusion of all unattacked arguments. 
%The empty extension is always admissible, and clauses (1) disallow this in the case of unattacked arguments. 
  \end{proof}
\end{lemma}

Finally, soft clauses (\ref{softpref}) are used for maximising the number of arguments included in the admissible set, thus identifying some of the preferred extensions (the cardinality-maximal ones). 
%To this end, let us consider a function $\ARGSTRUE$ that, given a variable assignment $\varass$, returns $\aset \subseteq \setargs$ such that $\arga \in \aset$ iff $\vfun{\arga} \equiv \top$ in $\varass$.

\begin{lemma}
  \label{lemma:apreferred}
  Given an  \AFname{} $\anAFsymbol = \anAF$, and $\varass$, a variable assignment satisfying $\satproblem_{\anAFsymbol}$, $\ARGSTRUE(\varass) \in \setgenext{\PR}{\anAFsymbol}$.

  \begin{proof}
    From Lemma \ref{lemma:hard} and the definition of MaxSAT problem, $\ARGSTRUE(\varass)$ is an \dungadmissible{} set with maximal cardinality. Hence, it is also maximal w.r.t. set inclusion. Therefore it is a \dungpreferred{} extension by Def. \ref{def_sem_recall}.
  \end{proof}
\end{lemma}

%{\bf DESCRIBE MAXSAT ENCODING}

To enumerate all the \dungpreferred{} extension, we introduce \maxsat{} (Algorithm \ref{alg:max}). \maxsat{} exploits the external function $\MAXSATSOLVER$ % and $\ARGSTRUE$. $\MAXSATSOLVER$
which is a MaxSAT solver able to prove unsatisfiability too: it accepts as input a CNF formula, composed by soft and hard clauses, and returns a variable assignment maximally satisfying the formula if it exists. If no variable assignment satisfies the hard constraints, $\varepsilon$ is returned. \maxsat{} can add clauses to the original $\satproblem_{\anAFsymbol}$: each added clause has to be intended as hard. This happens at line 9, with a clause for removing the current solution from the CNF formula, and at line 14, that adds a clause forcing to include arguments excluded from the last extension in the next solution.

% $\ARGSTRUE$ accepts as input a variable assignment, and returns the corresponding set of arguments labelled as \textin. 

\begin{algorithm}[t]
  \caption{Enumerating the preferred extensions of an \AFname}\label{alg:max}
  {\bf \maxsat{}}
   \begin{algorithmic}[1]
    \STATE \textbf{Input:} $\anAFsymbol = \anAF$
    \STATE \textbf{Output:} $E_p \subseteq \powset{\setargs}$  
    \STATE $E_p~:=~\emptyset$
    \STATE $cnf~:=~\satproblem_{\anAFsymbol}$
    \REPEAT
    \STATE $\lastpref~:=~\MAXSATSOLVER(cnf)$
    \IF{$\lastpref~!=~\varepsilon$}
        \STATE $E_p~:=~E_p \cup \set{\ARGSTRUE(\lastpref)}$
        \STATE $cnf~:=cnf \land \lnot \lastpref$ 
        \STATE $remaining~:=~\bot$
        \FOR{$\arga \in \setargs \setminus \ARGSTRUE(\lastpref)$}
                       \STATE $remaining~:=~remaining \lor \vfun{\arga}$
        \ENDFOR
    \STATE $cnf~:=~cnf \land remaining$
    \ENDIF
    \UNTIL $\lastpref~=~\varepsilon$
%   \IF{$E_p = \emptyset$}
%        \STATE $E_p = \set{\emptyset}$
%    \ENDIF
    \STATE \textbf{return} $E_p$ 
   \end{algorithmic}
\end{algorithm}

\begin{theorem}
  Given an  \AFname{} $\anAFsymbol = \anAF$, if $\MAXSATSOLVER$ is a MaxSAT solver able to prove unsatisfiability too, Algorithm \ref{alg:max} returns $E_p = \setgenext{\PR}{\anAFsymbol}$.

  \begin{proof}
    Let $i \in \set{1, \ldots, n}$ be an index of the executions of the \textbf{repeat} cycle at lines $5$--$16$.
    Given a variable $v$ used in the algorithm, let $v^i$ denote the status of $v$ 
    at the beginning of the $i$-th execution of the \textbf{repeat} cycle.
    First we show inductively that
    \begin{equation} \label{invcondalg}
        \MODELS(cnf^i) = \setadmwin{\anAFsymbol} \setminus 
        \set{S\mbox{ admissible in } \anAFsymbol \mid \exists P \in E_p^i, S \subseteq P}
    \end{equation}     
    For $i = 1$ (base case), this is immediate from Lemma \ref{lemma:hard}.
    For $i>1$, assume that (\ref{invcondalg}) holds for $i-1$ and consider the $i$-th execution. 
    If $\lastpref$ is assigned the value $\varepsilon$ in line $6$, then the result is obvious.
    Otherwise, by the inductive hypothesis $\ARGSTRUE(\lastpref)$ in line $8$ belongs to $\setadmwin{\anAFsymbol}$,
    and the clauses added to $cnf$ in lines $9$ and $14$ precisely exclude from the models of (the hard clauses of)
    $cnf$ those corresponding to $\lastpref$ and all sets contained in $\lastpref$.
    Thus (\ref{invcondalg}) is satisfied at the end of the iteration.

    As to termination of the algorithm, note that the set of models of $cnf$ after the execution of line $4$ is finite,
    and in each iteration of the algorithm if $\lastpref \neq \varepsilon$ the set of models of $cnf$ is strictly reduced.
    This ensures that the number $n$ of iterations is finite.

    Let us now turn to the correctness of the algorithm. 
    First, we prove that $E_p \subseteq \setgenext{\PR}{\anAFsymbol}$ by showing that   
    $\ARGSTRUE(\lastpref^{i}) \in \setgenext{\PR}{\anAFsymbol}$ in line $8$ of the $i$-th iteration.
    Since $\lastpref^{i}$ is a satisfying assignment of $cnf^i$ which cardinality-maximizes the soft clauses, 
    according to (\ref{invcondalg}) $\ARGSTRUE(\lastpref^{i})$ is admissible
    and is maximal among $\MODELS(cnf^i)$.
    If by contradiction $\ARGSTRUE(\lastpref^{i})$ is not a preferred extension,
    then there must be a preferred extension $P$ containing it, which according to (\ref{invcondalg})
    belongs to $E_p^i$. However, this would entail that $\lastpref_i$ is not a model of $cnf^i$.

   Finally, let us show that $E_p \supseteq \setgenext{\PR}{\anAFsymbol}$.
   The repeat cycle terminates when the set of models of $cnf$ is empty,
   thus according to (\ref{invcondalg}) for any admissible set including all unattacked arguments there is a 
  $P \in E_p$ that contains it.
  This easily entails that any preferred extension of $\anAFsymbol$ is in $E_p$.
  \end{proof}
\end{theorem}

\subsection{Using ASP for Computing Preferred Extensions}
}
\section{Experimental Analysis}\label{exp}

In order to evaluate the efficiency of the
introduced algorithms, we have carried out an experimental analysis where performance is analyzed from different perspectives. After describing the general setup, we first report on an experiment for choosing a parameter setting in the backend solver of \asp{}.
Next we report on a comparison of \asp{} with {\bf asprin} and {\bf D-FLAT\textasciicircum 2}. As discussed in the Introduction, these systems also support set optimization in an ASP setting, but use very different underlying algorithms. This is followed by a comparison to dedicated argumentation systems, in which we compare \asp{} with {\bf ASPARTIX-V}, since {\bf ASPARTIX-V} is also based on ASP. Finally, we compare \asp{} and \maxsat{} with \winner{}, which is the state-of-the-art solver, in the sense that it won the ICCMA2015 competition.
These comparisons are deliberately split up into small pairings, in order to have a crisper picture of the relative performance measure. Indeed, the IPC score depends on the solvers considered in the comparison, while PAR10 and coverage are specific to a single solver. Experiments do not need to be re-run for presenting results together: however, comparisons would be less informative given the changes in the IPC score.

%The aims of this analysis are: (i) to evaluate the effectiveness of the introduced algorithms, with regard to abstract both argumentation algorithms exploiting same approaches (i.e., MaxSAT and ASP) and to other general optimisation approaches; and (ii) to compare showcased algorithms with the state of the art of abstract argumentation solvers for enumerating preferred extensions.  

%%(i) to evaluate the effectiveness of the proposed approach, with respect to differently structured \AFname s; (ii) to compare \maxsat{} with the state of the art of abstract argumentation solvers; (iii) to identify aspects of frameworks that affect the performance of the proposed \maxsat{}; and (iv) to assess the performance of \maxsat{} on the largest \AFname s ever used in literature as benchmarks.

\subsection{Experimental Settings}\label{sec:expset}

The ASP-based algorithm \asp{} has been implemented as a bash script using basic tools like sed and grep and exploits clingo 4.5.2 \cite{clingo} as ASP solver. \maxsat{} has been implemented in C++, and exploits the ASPino MaxSAT solver \cite{aspino}.

The experiments were conducted on a cluster with computing nodes equipped with 2.5 GHz Intel{\texttrademark} Core 2 Quad Processors, 4 GB of RAM and Linux operating system. %As in the International Planning Competition (IPC), 
A cutoff of 900 seconds---wallclock time---was imposed to compute the preferred extensions for each \AFname. %, as in the last International Competition on Computational Models of Argumentation (ICCMA2015). %\footnote{http://www.icaps-conference.org/index.php/Main/Competitions} 
For each involved solver we recorded the overall result: success (if it finds each preferred extension), crashed, timed-out or ran out of memory. In fact, in our experimental evaluation all the unsuccessful runs are due to time-out. Experiments  have  been  conducted  on  the  ICCMA2015  benchmarks \cite{thimm2016summary}, which is a set of randomly generated 192 \AFname{}s. They have been generated considering three different graph models, in order to provide different levels of complexity. More details can be found on the ICCMA website.\footnote{http://argumentationcompetition.org}

%All the \AFname s considered in this analysis have been generated by using an improved version of {\tt AFBenchGen} \cite{comma14b}.  %Unsuccessful runs -- crashed, timed-out or out of memory -- were assigned a runtime equal to the cutoff.

The performance measures reported in this paper are the Penalised Average Runtime and the International Planning Competition (IPC) score. 

The
Penalised Average Runtime (PAR score) is a real number which counts (i) runs that  fail to find all the preferred extensions as ten times the cutoff time (PAR10) and (ii) runs that succeed as the actual runtime.
PAR scores are commonly used in automated algorithm configuration, algorithm selection, and portfolio construction, because using them
allows runtime to be considered while still placing a strong emphasis on high instance set coverage. 

The IPC score, borrowed from the Planning community, is defined as follows:

\begin{itemize}
\item For each test case (in our case, each test \AFname) let $T^*$ be the best execution time among the compared systems (if no system produces the solution within the time limit, the test case is not considered valid and ignored).

\item For each valid case, each system gets a score of $1/(1 + \log_{10}(T/T^*))$, where $T$ is its execution time, or a score of 0 if it fails in that case. Runtimes below 1 second get by default the maximal score of 1.
\end{itemize}

The IPC score for a system is the sum of its scores over all the valid test cases. 

It should be noted that the IPC score depends on the ensemble of tested systems, that is, it is a relative measure, which  depends on the experimental context. Indeed, in the following, the IPC scores of \asp{} vary depending on the different experimental settings. In contrast to this, the PAR10 score of \asp{} remains equal, as this is an absolute score.

%In the following we rely on the Wilcoxon Signed-Rank Test (WSRT) \cite{Wilcoxon1945} to identify significant subsets of data.

\subsection{Comparison of \asp{} Using Different Solver Configurations}

Clingo offers several different solver configurations (inherited from the clasp solver used in clingo) which correspond to different heuristic and search setups, see \cite{Gebser2015clasp} for detailed explanations. As a first analysis, we investigated how robust \asp{} is with respect to these configurations, and, as a by-product, we determined the best-performing configuration to be used for comparison with to other systems.

\begin{table}
\centering
%\begin{tabular}{ | l  c  c  c  c  c  c |}
\caption{\label{tab:asp} Comparison of different Clingo solver configurations, that can be exploited by \asp{}, on the ICCMA2015 benchmarks. Results are shown in terms of IPC score (maximum achievable is 192.0), percentages of success (\% Success) and PAR10.} 
\renewcommand{\arraystretch}{1.2}
\begin{tabular}{ l r r r }
\toprule

%\multicolumn{4}{|c|}{Total (8414)} \\
% & P1 & P1G & P2 & P2G & P4 & P4G \\
& IPC score & \% Success & PAR10 \\
 \midrule
%\textbf{Standard} & 177.3 & 99.5 & 81.8 \\
\textbf{Crafty} & 183.7 & 100.0 & 23.1 \\
\hdashline
\textbf{Frumpy} & 178.6 & 99.5 & 75.0 \\
\hdashline
\textbf{Jumpy} & 172.0 & 99.5 & 82.3 \\
\hdashline
\textbf{Many} & 177.4 & 99.5 & 84.8 \\
\bottomrule

\end{tabular}

\end{table}

Table \ref{tab:asp} shows a comparison of the different solver configurations, in terms of IPC score, percentage of successfully analysed \AFname{}S and PAR10, on the ICCMA2015 benchmarks. It can be observed that all configurations perform relatively similar to each other, implying that the particular chosen configuration is not critical for the performance of \asp{}.

However, there is one winning configuration: the {\bf Crafty} configuration allows \asp{} to enumerate preferred extensions of all the considered \AFname{}s, and to provide solutions faster. This configuration is geared towards ``crafted'' problems, which also makes sense in the context of the considered benchmarks. Therefore, in the rest of the experimental analysis the {\bf Crafty} configuration will be considered for \asp{}.

\subsection{Comparison with Existing General Algorithms}

We now turn to general tools that allow for easy representation and effective solution of subset optimization problems. To the best of our knowledge, the only tools of this kind are {\bf asprin} \cite{asprin} (with its predecessor {\bf metasp}) and {\bf D-FLAT\textasciicircum 2} \cite{dflat}, which we have discussed in the Introduction.

Actually, {\bf D-FLAT\textasciicircum 2} uses the computation of preferred extensions as an example. When we followed the instruction provided by the authors\footnote{{\bf D-FLAT\textasciicircum 2} software and instructions have been retrieved from \url{https://github.com/hmarkus/dflat-2} in March 2016.}, we were able to compute the preferred extensions of small \AFname{}s, but the system was already struggling with resource consumption on medium sized instances. On the ICCMA2015 benchmarks, the system ran out of memory very quickly, and we did not obtain any solutions for any of the ICCMA2015 benchmarks. This is probably due to the fact that ICCMA instances have large tree-width and {\bf D-FLAT\textasciicircum 2} relies heavily on tree decomposition. For this reason, in the remainder of this section we focus our comparison on {\bf asprin}. 

As {\bf asprin} is based on ASP, the most natural comparison is against the ASP implementation of the proposed approach, namely \asp{}. For {\bf asprin}, we used clingo 4.5.2, the same version that is used as a backend for \asp{}.
%In the following, we compare \asp{} against {\bf \asprin} for two main reasons: (i) they both exploit ASP solvers, and (ii) on the considered ICCMA2015 benchmarks, 

As input to {\bf asprin} we use the program $\admaspenc{\anAFsymbol}$ in Definition~\ref{def:admasp} together with the following preference definition, which makes {\bf asprin} compute those answer sets that are subset maximal for atoms with the predicate $\mathtt{in}$.

  \begin{align*}
&\mathtt{\#preference(p1,superset)\{} \mathtt{in(X) : arg(X)}\mathtt{\}.}\\
&\mathtt{\#optimize(p1).}
\end{align*}

\begin{table}
\centering
%\begin{tabular}{ | l  c  c  c  c  c  c |}
\caption{\label{tab:asprin} Comparison of \asp{} and {\bf asprin}, on the ICCMA2015 benchmarks. Results are shown in terms of IPC score (maximum achievable is 192.0), percentages of success (\% Success) and PAR10.} 
\renewcommand{\arraystretch}{1.2}
\begin{tabular}{ l r r r }
\toprule

%\multicolumn{4}{|c|}{Total (8414)} \\
% & P1 & P1G & P2 & P2G & P4 & P4G \\
& IPC score & \% Success & PAR10 \\
 \midrule
%\textbf{Standard} & 177.3 & 99.5 & 81.8 \\
\textbf{\asp{}} & 191.2 & 100.0 & 23.1 \\
\hdashline
\textbf{\bf asprin} & 157.8 & 100.0 & 44.9 \\
\bottomrule

\end{tabular}

\end{table}

The results of comparison between {\bf asprin} and \asp{} performed on the ICCMA2015 benchmarks for enumerating preferred extensions are shown in Table \ref{tab:asprin}. Results indicate that the proposed \asp{} system is significantly faster: \asp{} achieves an IPC score of 191.2 versus 157.8 of {\bf asprin}. According to the results, \asp{} is the fastest system on 187 of the considered \AFname{}s. This is also confirmed by the PAR10 scores; on average {\bf asprin} is about 20 seconds slower than \asp{}, while in terms of coverage, both the considered systems are able to successfully analyse all the 192 \AFname{}s of the benchmark set.   

\subsection{Comparison with Abstract Argumentation Algorithms Based on the Same Approaches}

% Unfortunately, we are not aware of any existing argumentation solver --except \maxsat{}-- able to enumerate preferred extensions, which would be based on MaxSAT. However, there exist solvers based on ASP. 
According to the results of ICCMA2015 \cite{thimm2016summary}, {\bf ASPARTIX-V} \cite{thimm2015system} is the ASP-based abstract argumentation solver that showed the best performance in the preferred enumeration track. Table \ref{tab:asp-v} presents the results of a comparison between \asp{} and {\bf ASPARTIX-V} performed on the ICCMA2015 benchmarks. Presented results indicate that \asp{} is faster, both in terms of IPC and PAR10 scores. Remarkably, \asp{} is able to successfully analyse a larger number of \AFname{}s (100.0\% against 94.0\%). %---171.3 versus 148.5 IPC score---and  These results are also confirmed in terms of PAR10 scores: 23.1 of \asp{} versus 630.9 of {\bf ASPARTIX-V}. 

\begin{table}
\centering
%\begin{tabular}{ | l  c  c  c  c  c  c |}
\caption{\label{tab:asp-v} Comparison of \asp{} and {\bf ASPARTIX-V}, the ASP-based abstract argumentation solver that showed the best performance in the preferred enumeration track, on the ICCMA2015 benchmarks. Results are shown in terms of IPC score (maximum achievable is 192.0), percentages of success (\% Success) and PAR10.} 
\renewcommand{\arraystretch}{1.2}
\begin{tabular}{ l r r r }
\toprule

%\multicolumn{4}{|c|}{Total (8414)} \\
% & P1 & P1G & P2 & P2G & P4 & P4G \\
& IPC score & \% Success & PAR10 \\
 \midrule
%\textbf{Standard} & 177.3 & 99.5 & 81.8 \\
\textbf{\asp{}} & 171.3 & 100.0 & 23.1 \\
\hdashline
\textbf{\bf ASPARTIX-V} & 148.5 & 94.0 & 630.9 \\
\bottomrule

\end{tabular}

\end{table}

At a closer look, it is noticeable that---among ICCMA2015 frameworks---the \AFname{}s with a very large grounded extension and many nodes in general\footnote{http://argumentationcompetition.org/2015/results.html} are very challenging for {\bf ASPARTIX-V}, while the proposed \asp{} solver is able to quickly and effectively analyse also such large frameworks.

%\change{Put this into a table and add PAR10 as well?}

\subsection{Comparison with the State of the Art Solver}

In this analysis we compare \asp{} and \maxsat{} with the winner of the the ICCMA2015 track on enumerating preferred extensions, \winner{} \cite{cegartix}. Table \ref{tab:iccma} shows the performance of considered solvers in terms of IPC score, percentage of successfully analysed \AFname{}s and PAR10. \asp{} performs significantly better than \maxsat{}. This is possibly due to the fact that each preferred extension results from an execution of the MaxSAT solver; and a final run is needed in order to demonstrate that no other extensions exist. Therefore, the number of MaxSAT calls is exactly the number of preferred extensions plus one. The generated MaxSAT formulas tend to be large on the considered benchmarks; therefore, the added overhead can be remarkable.

\begin{table}
\centering
\caption{\label{tab:iccma} Comparison of \maxsat{} and \asp{} with the winner of the track of ICCMA2015 on enumerating preferred extensions, \winner{}. Results are shown in terms of IPC score (maximum 192.0), percentages of success and PAR10.} 
%\begin{tabular}{ | l  c  c  c  c  c  c |}
\renewcommand{\arraystretch}{1.2}
\begin{tabular}{ l r r r }
\toprule

%\multicolumn{4}{|c|}{Total (8414)} \\
% & P1 & P1G & P2 & P2G & P4 & P4G \\
& IPC score & \% Success & PAR10 \\
 \midrule
%\textbf{Standard} & 177.3 & 99.5 & 81.8 \\
\asp{} & 161.7 & 100.0 & 23.1 \\
\hdashline
\maxsat{} & 115.3 & 85.0 & 1423.8 \\
\hdashline
\winner{} & 188.9 & 100.0 & 15.2 \\
\bottomrule

\end{tabular}

\begin{tabular}{ l r r r}
\toprule

%\multicolumn{4}{|c|}{Total (8414)} \\
% & P1 & P1G & P2 & P2G & P4 & P4G \\
%& \asp{} & \maxsat{} & \winner{} \\
% \midrule
%IPC score & 161.7 & 115.3 & 188.9\\
%\hdashline
%\% success & 100.0 & 85.0 & 100.0\\
%\hdashline
%\% best & {\bf 13.5} & 9.9 & 65.6\\
%PAR10 & 23.1 & 1423.8 & 15.2\\
%Speedup & -- & 1.9 & 1.3 & 2.8 & 1.7 \\
%\bottomrule

\end{tabular}

\end{table}

Interestingly, the performance of \asp{} is comparable to the performance of \winner{}; according to PAR10, \asp{} needs on average 8 seconds more to enumerate the preferred extensions. Moreover, by re-running the top solvers that took part in this track of ICCMA2015, we observed that \asp{} would have been ranked second. This is an impressive achievement, considering that the described algorithm: (i) is very general, in the sense that it does not exploit any argumentation-specific knowledge; (ii) is very easy to implement, particularly in the ASP configuration; and (iii) has been implemented as a prototype, without attention on software engineering techniques for improving performance.

%\change{\asp{} would have been ranked second in the competition. Very good performance in comparison to the specific algorithm}

%\change{\maxsat{} a bit slower, possibly to the number of calls and to the fact that cegartix uses argumentation-specific algorithms and highly engineered.}

\section{Conclusions and Future Work}

We have proposed a general methodology for solving subset optimality problems by means of iteratively solving cardinality optimality problems. This approach is motivated by the availability of efficient systems that support finding cardinality optimal solutions, namely MaxSAT solvers and ASP solvers supporting weak constraints.

As a methodology showcase we have produced the prototype system \asp{}, for enumerating preferred extensions of abstract argumentation frameworks. While the algorithms are general and easy to implement, an experimental analysis showed that they are competitive with the state-of-the-art system, which is specialized for this particular problem. On this showcase, our methods also prove higher performance than the existing general methods for computing subset optimal solutions of answer set programs, viz.~{\bf asprin} and {\bf D-FLAT\textasciicircum 2}.

\nop{
There are quite many opportunities for future work. As discussed in \cite{liffiton2008algorithms}, retain information between calls could help the performance, and we might investigate it in future work. However, this impact on the modularity of the concrete approach---i.e., a solver must be integrated in the framework---and can potentially reduce the generality of the overall framework. }

Apart from tuning the prototype implementation \asp{} to improve its performance, we intend to apply the methodology also to other application domains. Diagnosis or minimal model computation are immediate candidates. Another possibility is integrating our algorithm into a system like {\bf asprin}, or one that supports the same input language.

The methodology would also allow for computing $\SigmaP{3}$-hard problems when using ASP, which would be interesting to explore, as it would give rise to alternatives to implementations relying on QBFs.
It would also be worthwhile to explore whether the general methodology can be used also with formalisms different from MaxSAT and ASP, which would open entirely new avenues.

\nop{
Given that there are a few similarities to Algorithm MCSes in \cite{liffiton2008algorithms}, a comparison with the CAMUS system described in that paper could be attempted. Looking at the other direction, evaluating whether some implementation techniques in this area, e.g.~in~\cite{morgado2012maxsat}, can be generalized to our less specialized setting could be pursued as well.
}

\section*{Acknowledgements}

The authors would like to acknowledge the use of the University
of Huddersfield Queensgate Grid in carrying out this
work. 
%The authors also thank the anonymous reviewers for their helpful comments.

\bibliography{references}

\end{document}